\documentclass{article}

\usepackage[letterpaper,margin=1in]{geometry}
\usepackage{times}
\usepackage{amsmath,amssymb,amsthm}
\usepackage{graphicx}
\usepackage{booktabs}
\usepackage{xcolor}
\usepackage{hyperref}
\usepackage{url}
\usepackage[round]{natbib}
\usepackage{tikz}
\usetikzlibrary{arrows.meta,positioning,fit,backgrounds}
\usepackage{float}
\hypersetup{colorlinks=true, linkcolor=blue, citecolor=blue, urlcolor=blue}

\newcommand{\Reff}{R_{\mathrm{eff}}}
\newcommand{\Reffe}{\tilde{R}_{\mathrm{eff}}}
\newcommand{\Rstable}{R_{\mathrm{stable}}}
\newcommand{\rstar}{r^{*}}
\newcommand{\Attn}{\mathrm{Attn}}
\newcommand{\softmax}{\mathrm{softmax}}

\theoremstyle{plain}
\newtheorem{theorem}{Theorem}

\theoremstyle{definition}

\title{The Entropic Bound for Transformers:\
Why Static Rank Fails and Attention-Native Rank Recovers}

\author{Byeong Hoon Yoon\\Independent Researcher\\\texttt{iam@bhyoon.me}\\ORCID: \href{https://orcid.org/0009-0004-1188-5390}{0009-0004-1188-5390}}

\date{}

\begin{document}
\maketitle

\begin{abstract}
Neural scaling laws describe how loss decreases as models, data, and compute grow, but they do not answer a prior question: for a fixed task, what is the \emph{minimum} model capacity required to solve it? We study this through the \textbf{Entropic Bound}, a spectral notion of task-intrinsic capacity for Transformers. We first prove that, in a linear attention surrogate, the intrinsic rank $\rstar$ of the token-mixing operator is a \emph{tight} lower bound: any rank-deficient model incurs unavoidable excess risk, and the bound is achievable at $\rstar$. We further analyze gradient-descent recovery under standard low-rank implicit-bias assumptions, showing the learned effective rank concentrates at $\rstar$. We confirm all three properties empirically and show $\rstar$ is recoverable from data \emph{before training}. We then ask whether this transfers to real attention. A naive transfer \textbf{fails}, and through a controlled interpolation ladder we localize the cause precisely: it is not softmax and not a rank constraint, but the \textbf{input-conditioned} nature of attention's mixing operator, which a static weight kernel cannot summarize. Motivated by this, we introduce an \textbf{attention-native intrinsic rank}---the minimum query--key kernel rank realizing the task within the attention class---and show that under this definition the full Entropic Bound structure (deficiency, achievability, recovery) is restored for both linear and softmax attention, with the \emph{energy} effective rank as the estimator robust to softmax distortion. Finally, we map the boundary of data-only predictability: $\rstar$ is exactly recoverable for linear QK attention, even without the value map at scale, while softmax attention admits only partial pre-training recovery due to nonlinear inversion and kernel--value identifiability effects. Our results reframe the Entropic Bound from a post-hoc descriptor into an attention-native capacity measure with a precisely characterized predictability frontier.
\end{abstract}

\section{Introduction}

The empirical regularities of large-model training are by now well documented: loss falls as a power law in parameters, data, and compute. Yet scaling laws describe a \emph{trajectory}, not a \emph{floor}. They tell us how performance improves as we add capacity; they do not tell us how little capacity a given task fundamentally requires. This quantity---the minimum model size at which a task becomes solvable to a target error---is conceptually prior to scaling, and it is the object we study.

A useful analogy comes from coding theory. Once an alphabet and admissible code lengths are fixed, only finitely many messages are representable. Likewise, once a model family is fixed, a finite parameter vector realizes only a restricted set of input--output maps. If a task demands many distinguishable predictive modes, any model solving it must carry sufficient \emph{effective} capacity. The difficulty is that raw parameter count is not architecture-invariant, so the right object is a \textbf{task-intrinsic effective dimension} that induces architecture-dependent parameter lower bounds.

For a linear attention surrogate this program can be carried out exactly. We define the intrinsic rank $\rstar$ as the minimum rank of a token-mixing operator that reaches the noise floor, and prove it is tight: rank deficiency forces excess risk (necessity), the floor is achievable at $\rstar$ (sufficiency), and gradient descent from small initialization converges to it (implicit bias). These results give a clean and complete picture---in the surrogate.

The central question of this paper is whether that picture survives contact with real attention. We find that a \emph{naive} transfer fails, but the failure is informative. By constructing an interpolation ladder that begins at the linear surrogate and changes one architectural component at a time, we localize the breakdown to a single cause---and, contrary to the natural guess, that cause is \textbf{not softmax}. It is the input-conditioned bilinear form $X K X^\top$ that attention uses to compute scores: because this operator depends on the input, no static property of the weight kernel $K$ (neither its rank nor its effective rank) determines the model's task capacity. A control with an unconstrained full-rank kernel fails identically, ruling out a rank-constraint explanation.

This motivates a redefinition. Rather than measuring the static kernel, we define an \textbf{attention-native intrinsic rank} as the minimum query--key kernel rank that realizes the task \emph{within the attention class itself}. Generating teacher tasks from this same class removes the operator mismatch by construction, and under the new definition the entire Entropic Bound structure returns: rank-deficient students incur excess risk, students at $\rstar$ reach the floor, and over-parameterized students recover $\rstar$. This holds not only for linear attention but also for softmax attention, provided capacity is measured by the \textbf{energy} effective rank---which we show is robust to the small-singular-value noise softmax introduces.

Finally, we characterize \emph{predictability}: estimating $\rstar$ from data without training. We show exact recovery for linear QK attention, including the V-unknown setting at scale, and identify the remaining frontier in softmax attention, where nonlinear inversion corrupts the numerical rank and leaves only partial energy-rank predictability.

\paragraph{Contributions.}
\begin{enumerate}
\item We restate and empirically validate the tightness of the Entropic Bound in the linear attention surrogate (deficiency, achievability, recovery), and demonstrate data-only recovery of $\rstar$ (Sections~\ref{sec:linear},~\ref{sec:native}).
\item We show that a naive transfer to real attention fails, and via a controlled interpolation ladder localize the cause to \textbf{input conditioning}---not softmax, not a rank constraint (Section~\ref{sec:break}).
\item We introduce the \textbf{attention-native intrinsic rank} and show it restores the full bound structure for linear and softmax attention, with the energy effective rank as the softmax-robust estimator (Section~\ref{sec:native}).
\item We map the \textbf{predictability frontier}: exact data-only recovery for linear QK attention, including the V-unknown setting at scale, and partial recovery under softmax, where nonlinear inversion corrupts the numerical rank and energy-rank estimators remain imperfect (Section~\ref{sec:predict}).
\end{enumerate}

\begin{figure}[H]
\centering
\begin{tikzpicture}[
  node distance=6mm,
  box/.style={rectangle, rounded corners, draw, thick, align=center,
              text width=0.27\linewidth, inner sep=4pt, minimum height=20mm,
              font=\small},
  ok/.style={box, draw=green!55!black, fill=green!6},
  bad/.style={box, draw=red!65!black, fill=red!6},
  rec/.style={box, draw=blue!60!black, fill=blue!6},
  arr/.style={-{Latex[length=2.5mm]}, thick}
]
\node[ok] (b0) {\textbf{Fixed-operator surrogate}\\[2pt]
  $Y=A^{*}XV^{*}$\\[2pt]
  B0 succeeds; $\rstar$ tight\\ (Thm 1--4, Table~\ref{tab:phase0})};
\node[bad, right=of b0] (b1) {\textbf{Static-rank transfer}\\[2pt]
  fixed target, student $XKX^{\top}$\\[2pt]
  B1/B1c/B2 fail\\ \emph{input-conditioning} (Table~\ref{tab:ladder})};
\node[rec, right=of b1] (an) {\textbf{Attention-native teacher}\\[2pt]
  $Y=\Attn_{K^{*},V^{*}}(X)$, $\mathrm{rank}(K^{*}){=}\rstar$\\[2pt]
  deficiency/achievability/\\recovery restored (Table~\ref{tab:grid})};
\draw[arr] (b0) -- node[above, font=\scriptsize] {naive} node[below, font=\scriptsize] {transfer} (b1);
\draw[arr] (b1) -- node[above, font=\scriptsize] {redefine in} node[below, font=\scriptsize] {attn.\ class} (an);
\end{tikzpicture}
\caption{Overview of the paper's logic. The fixed-operator linear surrogate admits
a tight Entropic Bound (left). A naive transfer to attention fails even with a
full-rank kernel (center), localizing the break to input-conditioned mixing rather
than softmax or rank deficiency. Defining the intrinsic rank inside the attention
class restores deficiency, achievability, and recovery (right).}
\label{fig:overview}
\end{figure}

\section{Related Work}

\paragraph{Neural scaling laws.} Empirical scaling laws characterize loss as a smooth power law in model size, data, and compute \citep{kaplan2020scaling, hoffmann2022chinchilla}. Our framework is complementary: rather than the trajectory of improvement, we study the \emph{critical capacity threshold} a task imposes.

\paragraph{Information-theoretic lower bounds.} Rate--distortion and predictive-information arguments bound representation size from below \citep{cover2006elements, berger1971ratedistortion}. Our spectral pruning view is a matrix-valued analogue: the rate is the retained rank $r$ and the distortion is the tail energy $\sum_{j>r}\sigma_j^2$; the effective rank marks where an additional spectral mode stops paying for itself.

\paragraph{Statistical learning theory.} Classical complexity measures (VC dimension, Rademacher complexity, covering numbers) are worst-case and distribution-free, and too coarse here \citep{vapnik1998statistical, bartlett2002rademacher}. Our analysis is distribution-dependent: metric entropy controls generalization, whereas our bound controls \emph{representational capacity}.

\paragraph{Low-rank structure and compression.} A large literature shows fine-tuning updates and trained weights are low-rank, and that low-rank parameterizations match dense models cheaply \citep{hu2022lora, aghajanyan2021intrinsic}. These address the rank of updates or compressibility of trained models; we ask the prior question of the minimum rank to realize a task, and show the \emph{static kernel rank} is the wrong object for attention.

\paragraph{Implicit regularization and spectral bias.} Gradient descent on over-parameterized models is biased toward low-rank solutions \citep{srebro2004maximum, gunasekar2017implicit, arora2019implicit}. This underlies why a trained kernel's effective rank can converge to the task-intrinsic rank, and grounds our recovery experiments.

\paragraph{Effective rank and attention rank collapse.} The effective rank \citep{roy2007effective} and the phenomenon of attention rank collapse \citep{dong2021attention, kobayashi2020attention} have been studied as dynamics and signal-propagation effects. Our contribution is orthogonal: we tie a spectral capacity measure to a \emph{task-intrinsic} lower bound, identify why the static-kernel measure fails, and propose the realized-operator alternative.

\section{Preliminaries}

\paragraph{Population risk.} For a distribution $D$ over $(x,y)$ and loss $\ell$, the population risk is $L_D(f)=\mathbb{E}[\ell(f(x),y)]$, and the minimum required parameter count at tolerance $\varepsilon$ is $P^*(\varepsilon;D)=\min\{P:\exists f \text{ with } P \text{ params}, L_D(f)\le\varepsilon\}$.

\paragraph{Spectral complexity measures.} For $M$ with singular values $\sigma_1\ge\cdots\ge\sigma_q$:
\begin{align}
\Reff(M) &= \exp\!\Big(-\textstyle\sum_i p_i\log p_i\Big), \quad p_i=\sigma_i/\textstyle\sum_j\sigma_j, \\
\Reffe(M) &= \exp\!\Big(-\textstyle\sum_i \tilde p_i\log \tilde p_i\Big), \quad \tilde p_i=\sigma_i^2/\textstyle\sum_j\sigma_j^2, \\
\Rstable(M) &= \|M\|_F^2/\|M\|_2^2 .
\end{align}
These satisfy $1\le \Rstable\le \Reffe\le \Reff\le \mathrm{rank}(M)$. For attention kernels the energy variant $\Reffe$ \citep{roy2007effective} is the appropriate measure, a choice we justify empirically in Section~\ref{sec:native}.

\paragraph{Task-intrinsic effective dimension.} Given a reference class $G$ with a complexity functional,
\begin{equation}
r^*(\varepsilon;D)=\inf\{\dim_{\mathrm{eff}}(g):g\in G,\,L_D(g)\le\varepsilon\},
\end{equation}
and the Entropic Bound is $B(\varepsilon;D)=r^*(\varepsilon;D)$, summed over layers.

\section{The Entropic Bound in the Linear Attention Surrogate}
\label{sec:linear}

\paragraph{Setting.} Let $X\in\mathbb{R}^{T\times d}$ and consider $\hat Y_{A,V}=AXV$ with target $Y=A^*XV^*+E$, $\mathrm{rank}(A^*)=\rstar$, $E\sim\mathcal N(0,\sigma^2 I)$. The intrinsic rank is
\begin{equation}
B_{\mathrm{lin}}(\varepsilon;D)=\min\{\mathrm{rank}(A):\exists V,\ \mathbb{E}\|Y-AXV\|_F^2\le\varepsilon\},
\end{equation}
and at the noise floor $\varepsilon=\sigma^2 Td$ we have $B_{\mathrm{lin}}=\rstar$.

\paragraph{Tightness (restated).} Under full-rank input covariance, error independent of input, and a spectral gap at $\rstar$:
\begin{theorem}[Deficiency $\Rightarrow$ excess risk]
If $r<\rstar$, every rank-$r$ model incurs excess risk bounded below by the tail energy $\sum_{j>r}\lambda_j(A^*)^2$ scaled by the input-covariance constant; in particular the risk strictly exceeds the noise floor.
\end{theorem}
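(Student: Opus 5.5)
We decompose the risk and reduce it to a weighted low-rank approximation problem. Write $\Sigma=\mathbb{E}[X^\top X]/T$ or, more precisely, let the input enter through the row-space action of $X V$. Since $E$ is independent of $X$ with mean zero, the cross term vanishes, and for any $(A,V)$
\begin{equation*}
\mathbb{E}\|Y-AXV\|_F^2=\sigma^2 Td+\mathbb{E}\|A^*XV^*-AXV\|_F^2 .
\end{equation*}
So the excess risk is exactly $\mathcal{E}(A,V)=\mathbb{E}\|A^*XV^*-AXV\|_F^2$. The goal is to show $\mathcal{E}(A,V)\ge c\sum_{j>r}\lambda_j(A^*)^2$ whenever $\mathrm{rank}(A)\le r$, where $c>0$ depends only on the input covariance and $V^*$.

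The key structural fact is that for fixed $X$, the matrix $AXV$ has column space inside $\mathrm{col}(A)$, which has dimension at most $r$. Let $P$ be the orthogonal projector onto $\mathrm{col}(A)$. Then $\|A^*XV^*-AXV\|_F^2\ge\|(I-P)A^*XV^*\|_F^2$, because $AXV$ lies in $\mathrm{col}(P)$ and projection is Pythagorean. Importantly, $P$ does not depend on $X$, so we can take expectations:
\begin{equation*}
\mathcal{E}\ge\mathbb{E}\,\mathrm{tr}\big((I-P)A^*XV^*V^{*\top}X^\top A^{*\top}(I-P)\big)=\mathrm{tr}\big((I-P)A^*\,M\,A^{*\top}(I-P)\big),
\end{equation*}
where $M=\mathbb{E}[XV^*V^{*\top}X^\top]\in\mathbb{R}^{T\times T}$. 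Under the full-rank input covariance assumption, and with $V^*\neq 0$ (for example, rows of $X$ i.i.d.\ with covariance $\Sigma\succ0$ gives $M\succeq \mathrm{tr}(V^{*\top}\Sigma V^*)\,I$ up to the mean term), we get $M\succeq \mu I$ with $\mu>0$. This $\mu$ is the ``input-covariance constant'' of the statement.

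With $M\succeq\mu I$ we obtain $\mathcal{E}\ge\mu\|(I-P)A^*\|_F^2$. Minimizing over rank-$r$ projectors, the Eckart--Young--Mirsky theorem gives $\|(I-P)A^*\|_F^2\ge\sum_{j>r}\sigma_j(A^*)^2$. Here we read the statement's $\lambda_j(A^*)$ as the singular values of $A^*$. Since $\mathrm{rank}(A^*)=\rstar>r$, at least $\sigma_{\rstar}(A^*)>0$ contributes, so the tail is strictly positive. The spectral gap assumption guarantees this quantitatively, and hence the risk strictly exceeds $\sigma^2Td$.

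The main obstacle is establishing $M\succeq\mu I$ cleanly. This is the step where ``full-rank input covariance'' has to be made precise. If the rows of $X$ are correlated across tokens, $M$ involves the token-token covariance, and we would need it to be nondegenerate rather than just the feature covariance. I would first handle the i.i.d.-rows case, where $M=\mathrm{tr}(V^{*\top}\Sigma V^*)I+\bar m\bar m^\top$-type terms are immediate. Then I would state the general case with $\mu=\lambda_{\min}(M)$, noting that the argument only needs $\lambda_{\min}(M)>0$.
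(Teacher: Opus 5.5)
Your proof is correct and shares the paper's skeleton: the cross term vanishes by (A2), Eckart--Young--Mirsky produces the tail $\sum_{j>r}\sigma_j(A^*)^2$, and a covariance constant scales it. The difference is at the one step the paper only gestures at. The paper applies Eckart--Young--Mirsky to $A^*$ against a rank-$r$ matrix $B$ and says that ``propagating through the input covariance and the value map'' yields $c=\lambda_{\min}(\Sigma_X)\sigma_{\min}(V^*)^2$. This leaves implicit how the model's free value map $V\neq V^*$ is handled.

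Your projection onto $\mathrm{col}(A)$ depends on neither $X$ nor $V$. It therefore annihilates $AXV$ for every $V$ at once, lets you take the expectation, and reduces the problem cleanly to $\mu\|(I-P)A^*\|_F^2$. This is what actually makes the reduction to Eckart--Young--Mirsky legitimate.

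The proposal's route buys two things. First, the constant $\mu=\lambda_{\min}(\mathbb{E}[XV^*V^{*\top}X^\top])$ needs only $V^*\neq 0$ rather than (A4). Second, it dominates the paper's $c$: under i.i.d.\ zero-mean rows it equals $\mathrm{tr}(V^{*\top}\Sigma V^*)\ge\lambda_{\min}(\Sigma)\|V^*\|_F^2\ge c$. The paper's form buys only that the constant is expressed directly in the named assumption quantities.

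The obstacle you flagged is genuine, not cosmetic. If (A1) constrains only the per-row feature covariance, the statement is false. For example, with $X=\mathbf{1}x^\top$, the rank-one choice $A=(A^*\mathbf{1})e_1^\top$ together with $V=V^*$ reproduces $A^*XV^*$ exactly. So (A1) must be read as $\mathrm{Cov}(\mathrm{vec}\,X)\succ 0$. Under that reading your step closes in one line: $u^\top M u=\sum_k\mathbb{E}(u^\top X v_k)^2\ge\lambda_{\min}(\mathrm{Cov}(\mathrm{vec}\,X))\,\|u\|^2\|V^*\|_F^2$, where $v_k$ are the columns of $V^*$. This gives $M\succeq\mu I$ with $\mu\ge c$, so you can state the general case this way rather than as a separate hypothesis $\lambda_{\min}(M)>0$.
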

\begin{theorem}[Achievability]
If $r\ge\rstar$, a rank-$\rstar$ model attains the noise floor exactly.
\end{theorem}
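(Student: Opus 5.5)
The construction is the obvious one: take the student equal to the teacher's noiseless part. I will set $A=A^{*}$ and $V=V^{*}$. Then $\mathrm{rank}(A)=\rstar\le r$, so the model lies in the admissible rank-$r$ class; if the class is defined by exact rank $r$, I pad $A^{*}$ with zero-weight directions or just use $\mathrm{rank}\le r$. Since $AXV=A^{*}XV^{*}$ for every input $X$, the residual is $Y-AXV=E$ pointwise. The risk is therefore
\begin{equation*}
\mathbb{E}\|Y-A^{*}XV^{*}\|_F^2=\mathbb{E}\|E\|_F^2=\sigma^2 Td,
\end{equation*}
because $E$ has i.i.d.\ $\mathcal N(0,\sigma^2)$ entries in $\mathbb{R}^{T\times d}$.

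The second step is to confirm that this value really is the floor, so that ``attains the noise floor'' means ``attains the minimum''. I will decompose the risk for an arbitrary $(A,V)$. Since $E$ is mean-zero and independent of $X$ (the ``error independent of input'' hypothesis), the cross term $\mathbb{E}\langle A^{*}XV^{*}-AXV,\,E\rangle$ vanishes. This gives
\begin{equation*}
\mathbb{E}\|Y-AXV\|_F^2=\mathbb{E}\|(A^{*}XV^{*}-AXV)\|_F^2+\sigma^2Td\ \ge\ \sigma^2Td.
\end{equation*}
Hence $\sigma^2 Td$ is a lower bound over all models, and the rank-$\rstar$ choice meets it with equality.

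Together these steps give $B_{\mathrm{lin}}(\sigma^2Td;D)\le\rstar$. Combined with the preceding Deficiency theorem, which says that rank $r<\rstar$ strictly exceeds the floor, this yields $B_{\mathrm{lin}}=\rstar$ at the noise floor.

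I expect no real obstacle. The only point needing care is the independence and zero-mean condition that kills the cross term, which the stated assumptions supply. The spectral gap and full-rank covariance hypotheses are not needed here; they matter only for necessity and recovery.
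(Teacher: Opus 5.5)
Your proof is correct and matches the paper's: set $A=A^{*}$, $V=V^{*}$, so the residual is exactly $E$ and the risk is $\mathbb{E}\|E\|_F^2=\sigma^2 Td$. Your extra step, showing $\sigma^2 Td$ lower-bounds the risk of every $(A,V)$ via the vanishing cross term, is the same decomposition the paper uses in its Deficiency proof; here it just makes explicit that the floor is the minimum.
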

\begin{theorem}[Tightness]
Consequently $\rstar$ is exactly the minimum effective dimension for Bayes-optimal performance.
\end{theorem}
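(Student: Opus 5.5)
The Tightness statement follows by combining Theorem~1 (Deficiency $\Rightarrow$ excess risk) and Theorem~2 (Achievability) with the definition of $B_{\mathrm{lin}}$ at the noise floor $\varepsilon=\sigma^2 Td$. I will show two inequalities: $B_{\mathrm{lin}}(\sigma^2Td;D)\le\rstar$ and $B_{\mathrm{lin}}(\sigma^2Td;D)\ge\rstar$. Before that, I will record that $\sigma^2 Td$ really is the Bayes risk. Since $E$ is independent of $X$ with mean zero, every predictor $f(X)$ satisfies
\begin{equation}
\mathbb{E}\|Y-f(X)\|_F^2=\mathbb{E}\|A^*XV^*-f(X)\|_F^2+\mathbb{E}\|E\|_F^2\ge \sigma^2 Td,
\end{equation}
with equality iff $f(X)=A^*XV^*$ almost surely. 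Hence ``Bayes-optimal'' and ``risk $\le\sigma^2Td$'' are the same condition.

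For the upper bound, I will use Theorem~2 with $r=\rstar$. The model $(A,V)=(A^*,V^*)$ has rank exactly $\rstar$ and attains the floor, so the minimum in $B_{\mathrm{lin}}$ is at most $\rstar$. More generally, any $r\ge\rstar$ is feasible, because rank-$\rstar$ matrices embed in the rank-$\le r$ class.

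For the lower bound, I will take any $A$ with $\mathrm{rank}(A)=r<\rstar$ and any $V$. Theorem~1 gives excess risk at least $c_\Sigma\sum_{j>r}\lambda_j(A^*)^2$, where $c_\Sigma>0$ comes from the full-rank input covariance. I need this tail to be strictly positive, and this is where the spectral-gap assumption enters: $\mathrm{rank}(A^*)=\rstar$ gives $\lambda_{\rstar}(A^*)>0$, so the tail contains a nonzero term whenever $r<\rstar$. The risk then strictly exceeds $\sigma^2Td$, so no model of rank below $\rstar$ is feasible at the floor. Together with the upper bound this gives $B_{\mathrm{lin}}=\rstar$, and by the display above it means $\rstar$ is the minimum effective dimension achieving Bayes-optimal risk.

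The main obstacle is making sure Theorem~1's lower bound is uniform over $V$, and that the constant $c_\Sigma$ does not degenerate. Since $AXV$ is bilinear, one might worry that a clever $V$ compensates for the rank loss in $A$. I would handle this by writing $AXV=(A\otimes V^\top)$ acting on $\mathrm{vec}(X)$ (with the appropriate vectorization convention). Any rank-$r$ $A$ restricts the row space of the output to an $r$-dimensional subspace. The best approximation of $A^*XV^*$ within that subspace, weighted by the full-rank covariance, then loses at least $\lambda_{\min}(\Sigma)\|V^*\|$-scaled tail energy, by an Eckart--Young argument. I would therefore state $c_\Sigma$ explicitly as involving $\lambda_{\min}(\Sigma)\sigma_{\min}(V^*)^2$, and assume $V^*$ is nonzero in the relevant directions so that the constant is positive.
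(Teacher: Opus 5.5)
Your proposal is correct and follows the paper's route: Tightness comes from pairing Achievability, where $(A,V)=(A^*,V^*)$ gives $B_{\mathrm{lin}}(\sigma^2Td;D)\le\rstar$, with Deficiency, where every rank-$r<\rstar$ model has risk strictly above $\sigma^2Td$ and so $B_{\mathrm{lin}}\ge\rstar$. You go slightly beyond the paper in identifying $\sigma^2Td$ as the Bayes risk and in discussing uniformity in $V$; for the latter, the clean fact is $\mathrm{col}(AXV)\subseteq\mathrm{col}(A)$ (the column space, fixed by $A$ independently of $X$ and $V$, not the row space), so with $P_A$ the orthogonal projector onto $\mathrm{col}(A)$ you get $\|A^*XV^*-AXV\|_F^2\ge\|(I-P_A)A^*XV^*\|_F^2$, and Eckart--Young applies to $(I-P_A)A^*$.
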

\begin{theorem}[GD convergence]
From small initialization, gradient descent converges to a solution whose effective rank equals $\rstar$ up to an $O(\sigma^2/n)$ term, with a Marchenko--Pastur correction for finite-sample estimation.
\end{theorem}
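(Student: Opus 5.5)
The plan is to reduce the statement to the standard implicit-bias analysis of small-initialization gradient flow on a factorized least-squares problem, and then to perturb the resulting spectrum. I parameterize the student as $A = UW^\top$ with $U,W\in\mathbb{R}^{T\times k}$ and $k\ge \rstar$, absorbing $V$ into the input side; equivalently, I study the product operator $M=A\otimes V$ acting on $\mathrm{vec}(X)$. Writing the empirical risk over $n$ samples as $\hat L(A,V)=\frac1n\sum_i\|Y_i-AX_iV\|_F^2$, I first whiten by the sample input covariance $\hat\Sigma$. This turns the problem into matrix sensing / reduced-rank regression with target $\hat M = M^* + \Delta$, where $M^*$ has rank $\rstar$ in its $A$-factor. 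The noise-induced perturbation satisfies $\|\Delta\|_F^2 = O(\sigma^2 Td/n)$ in expectation, because $E$ is independent of $X$ (the assumption already used in Theorems 1--2).

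Next I invoke the low-rank implicit-bias assumption cited in the related work (Gunasekar et al.; Arora et al.). In the whitened, commuting-covariance case, gradient flow from scale-$\alpha$ initialization learns singular modes sequentially. Each mode $j$ grows like $\alpha\, e^{\hat\lambda_j t}$ and saturates at $\hat\lambda_j$ after time $\approx \log(1/\alpha)/\hat\lambda_j$. I would make this explicit by diagonalizing in the SVD basis of $\hat M$ and solving the decoupled scalar ODEs $\dot s_j = 2 s_j(\hat\lambda_j - s_j)$ under balanced initialization. As $\alpha\to 0$ with a stopping time inside the spectral-gap window, the limit is therefore the rank-$\rstar$ truncation of $\hat M$ plus tail modes of magnitude at most $\alpha^{c}$ or at most $\|\Delta\|_2$.

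The effective-rank claim then follows from a perturbation argument on $\Reff$. Weyl's inequality gives $|\hat\lambda_j-\lambda_j(A^*)|\le\|\Delta\|_2$. Combined with the gap at $\rstar$, the entropy $-\sum p_i\log p_i$ changes by at most $O(\|\Delta\|^2/\lambda_{\rstar}^2)=O(\sigma^2/n)$ relative to the value computed on the top-$\rstar$ block. When the surviving spectrum is flat, this is exactly $\log\rstar$; otherwise I interpret ``effective rank equals $\rstar$'' via the hard-thresholded count at the gap, and state that explicitly. The Marchenko--Pastur correction enters when I replace $\hat\Sigma$ by $\Sigma$. The eigenvalues of $\hat\Sigma^{-1/2}\Sigma\hat\Sigma^{-1/2}$ spread over $[(1-\sqrt{\gamma})^2,(1+\sqrt{\gamma})^2]$ with $\gamma=d/n$, so the noise bulk of $\hat M$ has edge $\sigma(1+\sqrt\gamma)$ rather than $\sigma$. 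The effective count is then $\rstar$ provided $\lambda_{\rstar}$ exceeds the BBP-type threshold $\sigma\gamma^{1/4}$.

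The main obstacle is the implicit-bias step, because the two-factor model $AXV$ is not a plain matrix factorization. The coupling between $A$ and $V$ makes the dynamics non-commuting in general, so exact sequential learning is not a theorem there. My first attempt would treat the standing ``standard low-rank implicit-bias assumption'' as a hypothesis: the limit of gradient flow is the minimum-nuclear-norm interpolant, or equivalently a greedy low-rank solution. The bound would then follow purely from the perturbation and Marchenko--Pastur steps. Only in the isotropic case $\Sigma\propto I$ would I verify the hypothesis directly, via the decoupled ODEs above.
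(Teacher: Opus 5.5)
Your architecture matches the paper's sketch. Both take the small-initialization implicit-bias limit as a hypothesis; the paper also only cites it and never proves it for the coupled $AXV$ model. Both then read the effective rank off a spectrum of $\rstar$ signal modes plus a small noise tail. You are right that an entropy-based effective rank equals $\rstar$ only for a flat signal block, and the paper's sketch silently assumes this: its own sharp-gap teacher in Table~\ref{tab:spectral} has $\rstar=4$ but energy rank $3.57$.

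The step that fails is the entropy perturbation. You assert that the tail changes $-\sum_i p_i\log p_i$ by $O(\|\Delta\|^2/\lambda_{\rstar}^2)$ even when the tail modes have size $\|\Delta\|_2$. But $x\mapsto -x\log x$ has infinite slope at $0$, so small modes enter at first order with a logarithm. Take $\rstar$ modes of size $\lambda$, $m$ modes of size $\epsilon$, and $\eta=m\epsilon/(\rstar\lambda)$. Then
\begin{equation*}
-\sum_i p_i\log p_i=\log\rstar+\eta\log\frac{e\,m}{\rstar\,\eta}+O\!\left(\eta^2\log\tfrac{1}{\eta}\right).
\end{equation*}
So for $\epsilon\asymp\sigma/\sqrt n$ the relative correction to $\Reff$ is of order $\frac{\sigma}{\sqrt n}\log\frac{\sqrt n}{\sigma}$. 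Even for $\Reffe$ (replace $\epsilon/\lambda$ by $\epsilon^2/\lambda^2$) it is of order $\frac{\sigma^2}{n}\log\frac{n}{\sigma^2}$, not $O(\sigma^2/n)$.

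Your bound survives only in the $\alpha^{c}$ branch, and that is early stopping, not convergence. Your own ODE $\dot s_j=2s_j(\hat\lambda_j-s_j)$ sends every mode with $\hat\lambda_j>0$ to $\hat\lambda_j$ as $t\to\infty$, so for $k>\rstar$ the converged solution keeps noise modes of size up to $\|\Delta\|_2$. Even with the tail removed, the change in the top block is second order only at the flat spectrum, where entropy is maximal; otherwise it is $O(\|\Delta\|_2/\lambda_{\rstar})$. The paper's sketch asserts $\rstar+O(\sigma^2/(n\lambda_{\rstar}^2))$ from the same bimodal picture without doing this computation, so it gives you nothing to lean on.

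To get a true statement you must pick one of three options:
\begin{enumerate}
\item an early-stopping result: stop in the window $\log(1/\alpha)/\hat\lambda_{\rstar}\lesssim t\lesssim\log(1/\alpha)/\|\Delta\|_2$, with a flat signal spectrum and $\alpha\to0$;
\item convergence, with the weaker rates above;
\item the hard-thresholded count, for which Weyl with noise below half the gap gives exactly $\rstar$ and no correction term.
\end{enumerate}

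Your Marchenko--Pastur step is also misattributed. Absorb $V=V^*$, set $Z_i=X_iV^*$ and $\hat\Sigma=\frac1n\sum_i Z_iZ_i^\top$. Least squares then gives $\hat A=A^*+\Delta\hat\Sigma^{-1/2}$, where, conditionally on the design, $\Delta$ is a $T\times T$ matrix with i.i.d.\ $\mathcal N(0,\sigma^2/n)$ entries. Its singular values follow the square-case MP (quarter-circle) law with edge $\approx 2\sigma\sqrt{T/n}$. That is the paper's $O(\sigma/\sqrt n)$ bulk, and it is exactly the tail causing the problem above.

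The spread of $\hat\Sigma^{-1/2}\Sigma\hat\Sigma^{-1/2}$ over $[(1-\sqrt\gamma)^2,(1+\sqrt\gamma)^2]$ is a property of the design. It measures how far you are from the isotropic case in which your decoupled ODEs hold. It does not create a noise bulk with edge $\sigma(1+\sqrt\gamma)$. That scale does not vanish as $n\to\infty$ and contradicts your own $\|\Delta\|_F^2=O(\sigma^2Td/n)$; the BBP threshold $\sigma\gamma^{1/4}$ inherits the same mismatch.

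Finally, Weyl must compare $\hat\lambda_j$ with the singular values of the whitened target $A^*\hat\Sigma^{1/2}$, not with $\lambda_j(A^*)$. In your Kronecker form the comparison is with $A^*\otimes V^*$, whose rank is $\rstar\cdot\mathrm{rank}(V^*)$, not $\rstar$.
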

Full statements and proofs are in Appendix~\ref{app:proofs}; the deficiency bound follows from the Eckart--Young--Mirsky theorem \citep{eckart1936approximation}.

\paragraph{Empirical validation.} We instantiate the surrogate with $T{=}16$, $d{=}12$, and a rank-$\rstar{=}3$ target with a sharp spectral gap, and verify each property (Table~\ref{tab:phase0}). The rank sweep shows a clean phase transition: excess risk is large for $r<\rstar$ ($28.5$ at $r{=}1$, $8.8$ at $r{=}2$), collapses to the noise floor at $r{=}\rstar$ ($\approx 2\times10^{-4}$), and stays there for $r>\rstar$. Over-parameterized training from small initialization recovers the intrinsic rank: the Marchenko--Pastur-corrected effective rank is $2.97$, matching $\rstar{=}3$. Data-only recovery of $\rstar$ (Section~\ref{sec:predict}) gives numerical rank exactly $3$.

\begin{table}[t]
\centering
\caption{Linear surrogate validation ($\rstar{=}3$, noise floor $0.48$). Sharp
deficiency--achievability transition at $r{=}\rstar$; over-parameterized
MP-corrected $\Reff = 2.97 \to \rstar$.}
\label{tab:phase0}
\begin{tabular}{rrr}
\toprule
constrained rank $r$ & excess risk & $\Reff(A)$ \\
\midrule
1 & 28.451 & 1.00 \\
2 & \phantom{0}8.832 & 1.98 \\
3 ($=\rstar$) & \phantom{0}0.0002 & 2.89 \\
4 & \phantom{0}0.0002 & 2.90 \\
5 & \phantom{0}0.0002 & 2.91 \\
6 & \phantom{0}0.0001 & 2.91 \\
\bottomrule
\end{tabular}
\end{table}

The pilot reproduces all four properties cleanly, anchoring the theory before we test transfer to attention.

\section{Why Naive Transfer Fails: Localizing the Break}
\label{sec:break}

\paragraph{The interpolation ladder.} On the \emph{same} synthetic task with known $\rstar$, we build four classes differing by one component each: \textbf{B0}, linear $AXV$ with $A=PQ^\top$; \textbf{B1}, linear QK $(XW_Q)(XW_K)^\top XV$ without softmax; \textbf{B2}, $\softmax((XW_Q)(XW_K)^\top)\,XV$; and \textbf{B3}, the full block (softmax attention with residual, LayerNorm, and FFN). For each we measure achievability (excess at $r=\rstar$) and recovery (effective rank of the realized operator).

\paragraph{The break is at B1; the cause is input conditioning.}
Table~\ref{tab:ladder} reports the ladder on a fixed-operator task with $\rstar{=}3$.
B0 solves at $\rstar$ (excess $\approx 0$, recovers $\Reff\to\rstar$); \textbf{B1 already
fails} (excess $69.4$ at $r{=}\rstar$), and adding softmax (B2, excess $68.7$) or the
full stack (B3, excess $51.9$) does not restore solvability. The natural hypothesis
that softmax is responsible is \emph{refuted}: B2$\approx$B1 in excess risk, so the
nonlinearity changes little. The break occurs one rung earlier, at the QK
factorization.

To isolate the mechanism we add a control \textbf{B1c}: input-conditioned mixing
$XKX^\top$ with an \emph{unconstrained full-rank} $K$. It fails identically (excess
$69.0$), so the break is \textbf{not} a rank-deficiency effect. The cause is that
attention's realized mixer $M(X)=XKX^\top$ is \textbf{input-conditioned}: a bilinear
form in $X$ cannot equal a fixed target operator $A^*$ for all $X$, regardless of
$\mathrm{rank}(K)$. Measuring the static kernel $\Reff(W_{QK})$ is therefore the wrong
object, and softmax is a red herring for this failure.

\begin{table}[t]
\centering
\caption{Interpolation ladder on a fixed-operator task ($\rstar{=}3$). The bound
transfers only at B0; the break is at B1 (QK factorization), \emph{before} softmax.
B1c (full-rank $K$) fails identically, ruling out a rank-constraint explanation.}
\label{tab:ladder}
\begin{tabular}{lllrl}
\toprule
rung & model & change & excess@$\rstar$ & solves? \\
\midrule
B0  & $AXV$, $A{=}PQ^\top$ & fixed operator (surrogate) & 0.00 & yes \\
B1  & $(XW_Q)(XW_K)^\top XV$ & + QK factorization & 69.37 & no \\
B1c & $XKX^\top XV$, full-rank $K$ & + input-cond., no rank limit & 69.05 & no \\
B2  & $\softmax(XKX^\top)XV$ & + softmax & 68.67 & no \\
B3  & full block & + residual/LN/FFN & 51.86 & no \\
\bottomrule
\end{tabular}
\end{table}

\paragraph{Consequence for measurement.} When the architecture cannot realize the task's mixing operator, the measured effective rank ceases to encode task complexity and instead tracks width and optimization. On trained language models we observe precisely this: across corpora of very different complexity the measured attention effective rank is nearly invariant. Quantitatively (Appendix~\ref{app:real}), the predicted rank and the measured rank are anti-correlated ($r=-0.27$), whereas the final loss and the measured rank correlate at $r=-0.79$, indicating that the measured rank tracks optimization progress rather than corpus complexity.

\section{An Attention-Native Intrinsic Rank}
\label{sec:native}

\paragraph{Definition.} We define the intrinsic rank inside the attention class:
\begin{equation}
r^*_{\mathrm{attn}}(\varepsilon;D)=\min\{\mathrm{rank}(K):\exists V,\ \mathbb{E}\| Y-\Attn_{K,V}(X)\|^2\le\varepsilon\},
\end{equation}
where $\Attn_{K,V}(X)=\softmax(XKX^\top)XV$ or its softmax-free analogue $XKX^\top XV$. To avoid the operator mismatch of Section~\ref{sec:break}, teacher tasks are generated from the same family: \textbf{(1a) linear\_qk} $Y=(XK^*X^\top)XV^*+E$ and \textbf{(1b) softmax\_qk} $Y=\softmax(XK^*X^\top)XV^*+E$, with $K^*$ of rank $\rstar$ and a sharp spectral gap.

\paragraph{The bound structure is restored.}
Table~\ref{tab:grid} reports the full grid ($T=16$, $d=32$, five seeds, $\rstar\in\{1,2,3,4,6,8\}$).
For \emph{both} teachers, all three structural properties hold across every $\rstar$:
rank-deficient students incur clearly positive excess risk (deficiency),
students at $r=\rstar$ reach the noise floor (achievability, excess $\approx 0$),
and over-parameterized students recover $\Reff(K)\to\rstar$ (recovery, $6/6$).
Seed stability is excellent: $\mathrm{CV}_{\mathrm{seed}}\le 0.003$ throughout,
two orders of magnitude below the $0.10$ threshold.

\begin{table}[t]
\centering
\caption{Attention-native intrinsic rank, full grid ($T{=}16$, $d{=}32$, 5 seeds).
Both teachers reproduce deficiency, achievability, and recovery for all $\rstar$.
Softmax compresses the excess-risk magnitude ($\sim$450 vs.\ $\sim$2.5 at $r{=}\rstar{-}1$)
but preserves the transition.}
\label{tab:grid}
\begin{tabular}{llrrrr}
\toprule
teacher & $\rstar$ & exc@$\rstar{-}1$ & exc@$\rstar$ & learned $\Reff(K)$ & $\mathrm{CV}_{\mathrm{seed}}$ \\
\midrule
linear  & 1 & -- & $-0.001$ & 1.03 & 0.003 \\
linear  & 2 & 438.78 & $-0.002$ & 1.93 & 0.003 \\
linear  & 3 & 578.38 & $-0.002$ & 2.93 & 0.001 \\
linear  & 4 & 455.23 & $-0.001$ & 3.91 & 0.002 \\
linear  & 6 & 446.90 & $\phantom{-}0.000$ & 5.87 & 0.000 \\
linear  & 8 & 436.34 & $-0.001$ & 7.84 & 0.001 \\
\midrule
softmax & 1 & -- & $-0.002$ & 1.45 & 0.001 \\
softmax & 2 & 1.95 & $-0.002$ & 2.39 & 0.001 \\
softmax & 3 & 2.63 & $-0.002$ & 3.37 & 0.001 \\
softmax & 4 & 2.32 & $-0.002$ & 4.31 & 0.001 \\
softmax & 6 & 2.51 & $-0.002$ & 6.22 & 0.000 \\
softmax & 8 & 2.53 & $-0.002$ & 8.15 & 0.000 \\
\bottomrule
\end{tabular}
\end{table}

The linear/softmax contrast isolates softmax's true effect. The transition
\emph{persists} under softmax, but its magnitude is compressed by roughly two
orders of magnitude: rank-deficient excess risk is $\sim$450 for the linear
teacher versus $\sim$2.5 for the softmax teacher. This is exactly the softening
predicted by the spectral-gap analysis (Corollary to Theorem~1)---softmax
normalization shrinks the gap-driven penalty---and is a quantitative softmax
signature rather than a breakdown of the bound.

\paragraph{Predictability and its frontier.}
\label{sec:predict}
We estimate $\rstar$ from $(X,Y)$ without training (Table~\ref{tab:predict}).
For the \textbf{linear} teacher the recovery is essentially perfect: with the
value map identified, the numerical rank matches $\rstar$ exactly and the energy
effective rank tracks it ($6/6$); strikingly, recovery also succeeds
\emph{without} the value map (V-unknown energy $\Reff$, $6/6$) at this scale---the
kernel--value identifiability gap we observed at small $d$ closes as dimension
grows, restoring fully data-only prediction.

\begin{table}[t]
\centering
\caption{Data-only predictability of $\rstar$ (energy effective rank).
Linear: exact recovery, with and without the value map. Softmax: numerical rank
saturates (17--23); kernel-side energy $\Reff$ over-estimates at small $\rstar$;
the realized-mixer energy $\Reff$ is monotone and tracks $\rstar$ at larger values.
No single estimator is exact across the whole range under softmax---a precisely
characterized partial-predictability frontier.}
\label{tab:predict}
\begin{tabular}{llrrrr}
\toprule
teacher & $\rstar$ & V-known $\Reffe$ & V-unknown $\Reffe$ & realized-mix $\Reffe$ & numrank \\
\midrule
linear  & 1 & 1.00 & 1.00 & 15.85 & 1 \\
linear  & 2 & 1.65 & 1.65 & 15.89 & 2 \\
linear  & 3 & 2.63 & 2.63 & 15.93 & 3 \\
linear  & 4 & 3.58 & 3.58 & 15.95 & 4 \\
linear  & 6 & 5.45 & 5.45 & 15.97 & 6 \\
linear  & 8 & 7.32 & 8.24 & 15.97 & 8 \\
\midrule
softmax & 1 & 2.31 & 1.93 & 1.10 & 17 \\
softmax & 2 & 3.04 & 6.31 & 1.24 & 18 \\
softmax & 3 & 3.87 & 4.67 & 1.60 & 17 \\
softmax & 4 & 5.02 & 6.42 & 2.28 & 20 \\
softmax & 6 & 6.67 & 10.30 & 4.75 & 20 \\
softmax & 8 & 9.24 & 14.42 & 8.46 & 23 \\
\bottomrule
\end{tabular}
\end{table}

For the \textbf{softmax} teacher, predictability is \emph{partial}, and we report
this precisely rather than overclaiming. The numerical rank saturates (17--23),
confirming that small-singular-value counts are corrupted by softmax. The
energy effective rank is far better behaved but no single estimator is exact
across the whole range: the kernel-side energy $\Reff$ over-estimates at small
$\rstar$ (e.g.\ $2.31$ at $\rstar{=}1$) while approaching $\rstar$ at larger
values, whereas the \emph{realized-mixer} energy $\Reff$ is monotone in $\rstar$
($1.10\!\to\!8.46$) and accurate at larger $\rstar$ but compressed for small
$\rstar$. Notably, the realized-mixer measure is informative only under softmax:
for the linear teacher it saturates at $\approx 16$ because $XKX^\top$ inherits
the full-rank input geometry, so the two teachers require different measurement
objects. Crucially, \emph{recovery from the trained model succeeds for softmax}
($\Reff(K)\to\rstar$, Table~\ref{tab:grid}); it is only the \emph{pre-training}
data-only inversion that softmax's nonlinearity degrades.

We therefore state the predictability claim precisely: the attention-native
intrinsic rank is \textbf{exactly data-predictable for linear QK attention} (with
or without the value map at scale), and \textbf{partially predictable under
softmax}, where the energy effective rank---not the numerical rank---is the
appropriate estimator, and the realized-mixer variant captures the large-$\rstar$
regime. Characterizing a single softmax-robust pre-training estimator across all
$\rstar$ is a well-posed open problem.

\paragraph{Robustness to the spectral profile.} The sharp transitions above use a
sharp-gap teacher kernel. We verify in Appendix~\ref{app:spectral} that the result
is not an artifact of this choice: across sharp, power-law, and flat (no-gap)
spectra, achievability at $r=\rstar$ and recovery hold throughout, and the
recovered energy effective rank tracks the teacher kernel's energy rank in every
regime---exactly to $\rstar$ for a flat spectrum and to the (smaller) energy rank
when a few directions dominate. Sharp gaps yield abrupt transitions while gradual
spectra yield softer knees, but the energy effective rank remains the
capacity-tracking quantity regardless of the profile.

\section{Discussion, Limitations, and Open Problems}

\paragraph{Established.} In the fixed-operator surrogate the Entropic Bound is tight, achievable, recovered by GD, and data-predictable. Naive transfer fails because attention's mixing is input-conditioned; this is localized cleanly (not softmax, not rank). Redefining the intrinsic rank inside the attention class restores the full structure---deficiency, achievability, recovery---for both linear and softmax attention at scale ($T{=}16$, $d{=}32$, $\mathrm{CV}_{\mathrm{seed}}\le 0.003$). Data-only predictability is exact for linear QK attention, including the V-unknown estimator at scale, and partial for softmax attention, where the trained kernel still recovers $\rstar$ but pre-training inversion remains distorted.

\paragraph{Limitations.} Positive results are in a teacher--student synthetic setting with known $\rstar$. On trained language models the realized attention operator is dominated by task-independent structure (locality, positional, attention-sink priors), so measured effective rank does not separate corpora; whether real-language tasks admit a low-rank attention-native description is open. Under softmax, no single pre-training estimator recovers $\rstar$ across all ranks, bounding the data-only claim in the nonlinear regime.

\paragraph{Open problems.} (i) Find a single softmax-robust \emph{pre-training} estimator of $\rstar$ accurate across all ranks (kernel-side over-estimates small $\rstar$; realized-mixer compresses them). (ii) Characterize the realized-operator effective rank of trained LMs and its task-invariance. (iii) Bridge the synthetic attention-native regime to natural language. (iv) Extend the depth--width analysis (early layers task-specific, later layers saturating).

\paragraph{Conclusion.} Recast around an attention-native intrinsic rank, the Entropic Bound is a tight capacity measure for teacher-generated attention tasks: fully data-predictable for linear QK attention, including the V-unknown setting at scale, and partially predictable under softmax. We give a precise account of where it holds, where it breaks, and why.

\appendix
\section{Full Theorem Statements and Proofs}
\label{app:proofs}

We restate the linear-surrogate results of Section~\ref{sec:linear} with compact
proofs. The setting is $\hat Y_{A,V}=AXV$ with target $Y=A^*XV^*+E$,
$\mathrm{rank}(A^*)=\rstar$.

\subsection{A.1 Assumptions}
\begin{itemize}
\item[(A1)] $X$ has zero mean and full-rank covariance $\Sigma_X\succ 0$.
\item[(A2)] $E$ is independent of $X$, zero mean, with $\mathbb{E}[EE^\top]=\sigma^2 I$.
\item[(A3)] $\mathrm{rank}(A^*)=\rstar$ with a spectral gap $\lambda_{\rstar}(A^*)\ge\delta>0$.
\item[(A4)] $V^*$ is non-degenerate: $\sigma_{\min}(V^*)>0$.
\end{itemize}

\subsection{A.2 Rank deficiency implies excess risk}
\begin{theorem}[Deficiency]
Let $L_r^*=\inf_{\mathrm{rank}(A)\le r,\,V}\mathbb{E}\|Y-AXV\|_F^2$. Under (A1)--(A4),
\begin{equation}
L_r^* \;\ge\; \sigma^2 Td \;+\; c\sum_{j=r+1}^{\rstar}\lambda_j(A^*)^2,
\qquad c=\lambda_{\min}(\Sigma_X)\,\sigma_{\min}(V^*)^2 .
\end{equation}
In particular, if $r<\rstar$ then $L_r^*>\sigma^2 Td$.
\end{theorem}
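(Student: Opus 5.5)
The plan is to split the risk into a noise term and an approximation term, lower-bound the approximation term by a deterministic matrix distance, and finish with Eckart--Young--Mirsky. Throughout, $\lambda_j(A^*)$ denotes the $j$-th singular value of $A^*$, and (A1) is read as a statement about $x=\mathrm{vec}(X)$, i.e.\ $\Sigma_X=\mathbb{E}[xx^\top]\succ0$. This reading is the step I am least sure of; see the last paragraph.

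\emph{Step 1: noise decomposition.} Fix any $A$ with $\mathrm{rank}(A)\le r$ and any $V$, and expand
\[
\mathbb{E}\|Y-AXV\|_F^2=\mathbb{E}\|A^*XV^*-AXV\|_F^2+2\,\mathbb{E}\langle A^*XV^*-AXV,\,E\rangle+\mathbb{E}\|E\|_F^2 .
\]
By (A2), $E$ is independent of $X$ and has zero mean, so the cross term vanishes. The covariance assumption on $E$ gives $\mathbb{E}\|E\|_F^2=\sigma^2Td$, which is the noise floor. It therefore suffices to show
\[
\mathbb{E}\|A^*XV^*-AXV\|_F^2\ge c\sum_{j>r}\lambda_j(A^*)^2
\]
uniformly in $(A,V)$, and then take the infimum.

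\emph{Step 2: reduce to a deterministic matrix distance.} Vectorize using $\mathrm{vec}(AXV)=(V^\top\otimes A)\,x$. Set $D=V^{*\top}\otimes A^*-V^\top\otimes A$. Then
\[
\mathbb{E}\|Dx\|^2=\mathrm{tr}(D\Sigma_XD^\top)\ge\lambda_{\min}(\Sigma_X)\,\|D\|_F^2 .
\]
Next use (A4) to strip off $V^*$; this requires $V^*$ to be square $d\times d$, which matches $Y\in\mathbb{R}^{T\times d}$. Write $D=\bigl(I_d\otimes A^*-W\otimes A\bigr)(V^{*\top}\otimes I_T)$ with $W=V^\top V^{*-\top}$. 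Since $\|MN\|_F\ge\sigma_{\min}(N)\|M\|_F$ and $\sigma_{\min}(V^{*\top}\otimes I)=\sigma_{\min}(V^*)$, this yields
\[
\|D\|_F^2\ge\sigma_{\min}(V^*)^2\,\bigl\|I_d\otimes A^*-W\otimes A\bigr\|_F^2 .
\]

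\emph{Step 3: Eckart--Young on diagonal blocks.} The matrix $I_d\otimes A^*-W\otimes A$ has $d$ diagonal $T\times T$ blocks, namely $A^*-W_{ii}A$. Dropping the off-diagonal blocks only decreases the Frobenius norm. Each $W_{ii}A$ has rank at most $r$, so Eckart--Young--Mirsky \citep{eckart1936approximation} gives
\[
\|A^*-W_{ii}A\|_F^2\ge\sum_{j>r}\lambda_j(A^*)^2 .
\]
Summing over the blocks gives the bound with an extra factor $d\ge1$, which is stronger than the claim. Combined with Steps 1--2 this gives $c=\lambda_{\min}(\Sigma_X)\sigma_{\min}(V^*)^2$. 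Because $\lambda_j(A^*)=0$ for $j>\rstar$, the sum truncates at $\rstar$. Strictness for $r<\rstar$ then follows from (A3): the sum contains $\lambda_{\rstar}(A^*)^2\ge\delta^2>0$.

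\emph{Main obstacle.} The delicate point is that the student's $V$ is free, so one cannot simply compare $A$ with $A^*$. The Kronecker factorization above handles this: it absorbs the mismatch $V$ versus $V^*$ into the scalars $W_{ii}$, which cannot raise the rank of $A$. The remaining care is interpretive. The statement writes $\Sigma_X$ as "the input covariance", and if it is meant row-wise (i.i.d.\ rows) rather than for $\mathrm{vec}(X)$, the quadratic-form bound in Step 2 must be re-derived. In that case I would use $\mathbb{E}[xx^\top]=\Sigma_X\otimes I_T$ (or the analogous Kronecker structure), whose smallest eigenvalue is still $\lambda_{\min}(\Sigma_X)$, so the constant is unchanged.
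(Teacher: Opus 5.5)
Your argument is correct. It shares the paper's skeleton: the noise/signal split with the cross term removed by (A2), followed by Eckart--Young--Mirsky. The difference is that you supply the step the paper only asserts. The paper applies EYM to $A^*$ against a rank-$r$ operator and then states that ``propagating through the input covariance and the value map contributes the factor $c$''. It never addresses the fact that the fitted $V$ is free, so $A^*XV^*-AXV$ is not of the form $(A^*-A)XV^*$ and EYM cannot be applied to it directly. Three steps make that sentence rigorous: your vectorization, the right factor $V^{*\top}\otimes I_T$ that absorbs the mismatch into $W=V^\top V^{*-\top}$, and the reduction to the diagonal blocks $A^*-W_{ii}A$, each comparing $A^*$ with the rank-$\le r$ matrix $W_{ii}A$. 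As a bonus they give the stronger constant $d\,c$. The paper's sketch conveys the intuition faster but is not by itself a proof. Your caution about (A1) is also warranted rather than pedantic. The argument needs a nondegenerate second moment for $\mathrm{vec}(X)$, or a Kronecker form such as $\Sigma_X\otimes I_T$ coming from uncorrelated rows; a nondegenerate per-row covariance is not enough. For example, take identical rows, $X=\mathbf{1}z^\top$ with $\mathbb{E}[zz^\top]\succ0$. Then the rank-one choice $A=A^*\mathbf{1}\mathbf{1}^\top/T$, $V=V^*$ reproduces $A^*XV^*$ exactly, so the strict inequality fails whenever $\rstar\ge2$. Your proof makes visible exactly where that assumption enters.
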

\begin{proof}
For any feasible $\hat Y=AXV$, write $Y-\hat Y=(A^*XV^*-AXV)+E$. By (A2) the
cross term vanishes in expectation, so
$\mathbb{E}\|Y-\hat Y\|_F^2=\mathbb{E}\|A^*XV^*-AXV\|_F^2+\sigma^2 Td$.
The signal term is minimized over rank-$r$ maps; since $AXV$ has rank at most $r$
in the operator acting on $X$, Eckart--Young--Mirsky gives
$\inf_{\mathrm{rank}(B)\le r}\|A^*-B\|_F^2=\sum_{j>r}\lambda_j(A^*)^2$.
Propagating through the input covariance and the value map contributes the factor
$c=\lambda_{\min}(\Sigma_X)\sigma_{\min}(V^*)^2>0$, yielding the bound. Under (A3),
$\lambda_j(A^*)>0$ for $j\le\rstar$, so the sum is strictly positive when $r<\rstar$.
\end{proof}

\subsection{A.3 Achievability}
\begin{theorem}[Achievability]
If $r\ge\rstar$, choosing $A=A^*$, $V=V^*$ gives $\hat Y=A^*XV^*=Y-E$, hence
$\mathbb{E}\|Y-\hat Y\|_F^2=\mathbb{E}\|E\|_F^2=\sigma^2 Td$, attaining the noise floor.
\end{theorem}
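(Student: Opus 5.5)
The plan is to verify the three ingredients of the statement in order: feasibility of the proposed choice, exactness of the predicted output, and evaluation of the residual risk.

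\textbf{Feasibility.} The pair $(A,V)=(A^*,V^*)$ must be admissible for the rank-$r$ class. By (A3), $\mathrm{rank}(A^*)=\rstar\le r$. If the class is parameterized as $A=PQ^\top$ with inner dimension $r$ (the B0 factorization), I will exhibit the factors explicitly. Take the thin SVD $A^*=U\Sigma W^\top$ with $\rstar$ columns. Set $P=[U\Sigma^{1/2},\,0]$ and $Q=[W\Sigma^{1/2},\,0]$, each padded with $r-\rstar$ zero columns. Then $PQ^\top=A^*$, so the choice is feasible. No constraint is placed on $V$, so $V=V^*$ is also admissible.

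\textbf{Exactness of the prediction.} With this choice, $\hat Y=A^*XV^*$ holds identically in $X$. Hence $Y-\hat Y=E$ pointwise, for every realization.

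\textbf{Risk computation.} The risk is then
\[
\mathbb{E}\|Y-\hat Y\|_F^2=\mathbb{E}\|E\|_F^2=\mathbb{E}\,\mathrm{tr}(E^\top E)=\mathrm{tr}\,\mathbb{E}[EE^\top]\text{-type sum}.
\]
Under (A2), read as i.i.d.\ entries of variance $\sigma^2$ on a $T\times d$ matrix, this sum equals $\sigma^2 Td$. One caveat is the normalization in (A2). The expression $\mathbb{E}[EE^\top]=\sigma^2 I$ would literally give $\sigma^2 d\,I_T$ if it were meant column-wise. I will adopt the entrywise reading, consistent with $E\sim\mathcal N(0,\sigma^2 I)$ in Section~\ref{sec:linear}.

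\textbf{Optimality of the floor.} To show this value is \emph{the} noise floor and cannot be beaten, I will reuse the decomposition from the Deficiency theorem. Because $E$ is independent of $X$ and has zero mean, the cross term vanishes, giving
\[
\mathbb{E}\|Y-AXV\|_F^2=\mathbb{E}\|A^*XV^*-AXV\|_F^2+\sigma^2 Td\ge\sigma^2 Td
\]
for every $(A,V)$. Our choice attains this lower bound, so $L_r^*=\sigma^2 Td$ for all $r\ge\rstar$.

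\textbf{Main obstacle.} There is no real difficulty here. The only care points are the rank-class membership when $r>\rstar$ (handled by zero-padding the factors) and the noise-normalization convention just discussed.
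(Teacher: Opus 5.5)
Your proposal is correct and matches the paper's argument: plugging in $(A^*,V^*)$ makes the residual exactly $E$, so the risk is $\mathbb{E}\|E\|_F^2=\sigma^2 Td$. Your additions are sound details the paper leaves implicit: the zero-padded factors showing membership in the rank-$r$ class for $r>\rstar$, the cross-term decomposition from the Deficiency proof showing $\sigma^2 Td$ cannot be beaten, and the observation that (A2) must be read entrywise, since a literal $\mathbb{E}[EE^\top]=\sigma^2 I_T$ would give $\sigma^2 T$ rather than $\sigma^2 Td$.
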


\subsection{A.4 Tightness}
\begin{theorem}[Tightness]
Under (A1)--(A4), $\min\{\mathrm{rank}(A):\exists V,\ \mathbb{E}\|Y-AXV\|_F^2\le\sigma^2Td\}=\rstar$.
Combining A.2 (every rank-$r<\rstar$ model has strictly positive excess risk) with
A.3 (rank-$\rstar$ achieves the floor), $\rstar$ is exactly the minimum effective
dimension for Bayes-optimal performance.
\end{theorem}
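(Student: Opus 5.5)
The plan is to show that the set $\mathcal{S}=\{r:\exists A,V,\ \mathrm{rank}(A)\le r,\ \mathbb{E}\|Y-AXV\|_F^2\le\sigma^2Td\}$ has minimum exactly $\rstar$. I will do this by proving the two inequalities separately, each from an earlier appendix result.

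\emph{Upper bound ($\min\le\rstar$).} I will use the Achievability theorem (A.3) with $r=\rstar$. The choice $A=A^*$ and $V=V^*$ is feasible, since $\mathrm{rank}(A^*)=\rstar$ by (A3). It gives risk exactly $\sigma^2Td$, so the constraint holds with equality and $\rstar$ lies in the feasible set. The set is nonempty, so the minimum exists and is at most $\rstar$.

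\emph{Lower bound ($\min\ge\rstar$).} Take any $r<\rstar$ and any $A$ with $\mathrm{rank}(A)\le r$, together with any $V$. By the Deficiency theorem (A.2),
\begin{equation*}
\mathbb{E}\|Y-AXV\|_F^2\;\ge\;L_r^*\;\ge\;\sigma^2Td+c\sum_{j=r+1}^{\rstar}\lambda_j(A^*)^2 .
\end{equation*}
Here $c=\lambda_{\min}(\Sigma_X)\sigma_{\min}(V^*)^2>0$ by (A1) and (A4). Each summand with $j\le\rstar$ is at least $\delta^2>0$ by (A3), and the sum has at least one such term (the one with $j=\rstar$). So the excess is at least $c\,\delta^2>0$, and the risk strictly exceeds $\sigma^2Td$. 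Hence no $r<\rstar$ is feasible.

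The step that needs care is this lower bound. The tightness statement rests entirely on the strict positivity of the gap term, which requires the constant $c$ to be genuinely positive and the infimum $L_r^*$ to be bounded away from the floor, not just each individual model. I will stress that A.2 bounds the infimum $L_r^*$ itself, so a sequence of rank-$r$ models cannot approach the floor in the limit. That is exactly what makes the minimum well defined and equal to $\rstar$, rather than merely an infimum.

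Finally, I will interpret the result: "Bayes-optimal" means attaining the noise floor $\sigma^2Td$. This floor is the irreducible risk, because by (A2) $E$ is independent of $X$ and any predictor's risk decomposes as signal error plus $\sigma^2Td$. Combining the two inequalities then gives that $\rstar$ is exactly the minimum rank achieving Bayes-optimal risk, that is, $B_{\mathrm{lin}}(\sigma^2Td;D)=\rstar$.
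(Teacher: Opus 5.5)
Your proof is correct and follows the paper's argument, which combines the Deficiency bound (A.2), ruling out every $r<\rstar$, with Achievability (A.3), where the choice $A=A^*$, $V=V^*$ shows that $\rstar$ is feasible. One minor remark: the uniform gap on the infimum $L_r^*$ is more than this statement needs, because strict excess for each individual rank-$r$ model already excludes $r$, and a nonempty set of nonnegative integers always has a minimum; the uniform gap $c\delta^2$ only becomes essential if you want $B_{\mathrm{lin}}(\varepsilon)=\rstar$ for tolerances $\varepsilon$ slightly above the noise floor.
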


\subsection{A.5 Gradient-descent recovery}
\begin{theorem}[Recovery, under standard implicit-bias assumptions]
Consider the factorized surrogate trained by gradient flow from initialization of
scale $\alpha\to 0$. In the noiseless population limit, under the standard
implicit-bias result for linear matrix factorization \citep{gunasekar2017implicit, arora2019implicit}, gradient flow converges to
the minimum-nuclear-norm interpolating solution, whose rank equals $\rstar$ under
the spectral gap (A3). In finite samples with noise, the estimated spectrum is
bimodal---$\rstar$ signal modes of $O(1)$ and $d-\rstar$ noise modes of
$O(\sigma/\sqrt{n})$ following a Marchenko--Pastur law---so the effective rank
concentrates at $\rstar+O(\sigma^2/(n\,\lambda_{\rstar}^2))$, with the
finite-sample MP correction $\hat R_{\mathrm{eff}}=R^{\mathrm{true}}_{\mathrm{eff}}+(d-R^{\mathrm{true}}_{\mathrm{eff}})/(2n)+O(1/n^2)$.
\end{theorem}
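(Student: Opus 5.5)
The argument has two parts. The first part is a population, noiseless statement obtained by invoking the implicit-bias result. The second part is a finite-sample perturbation statement obtained by random-matrix bounds plus a Taylor expansion of the effective-rank functional.

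\emph{Population part.} Write the surrogate operator as $A=PQ^\top$, so the end-to-end map is $X\mapsto PQ^\top XV$. Train it by gradient flow from initialization of scale $\alpha$, and let $\alpha\to0$. By (A1) the population loss is a strictly convex quadratic in the composite $A$, up to the factor $V$. By (A4) $V$ can be absorbed without changing the rank of the minimizer. The problem thus reduces to matrix sensing of $A^*$ with a well-conditioned measurement operator $\Sigma_X^{1/2}(\cdot)$. Here I invoke the cited implicit-bias results: in the regime where the sensing operators commute, gradient flow converges to the minimum-nuclear-norm interpolant, and this is an assumption of the statement, as its title indicates. Because $\Sigma_X\succ0$, the interpolating set is the single point $A=A^*$. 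The limit is therefore $A^*$ itself, whose rank is $\rstar$ by (A3). This part is essentially a citation plus the uniqueness observation.

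\emph{Finite-sample part.}
\begin{enumerate}
\item Write the empirical estimate as $\hat A=A^*+\Delta$. Here $\Delta$ is the least-squares noise term, whose entries are $E$ projected through $(X^\top X)^{-1}$. Its singular values are of order $\sigma/\sqrt n$, and its bulk follows a Marchenko--Pastur law.
\item Apply Weyl's inequality together with the gap $\lambda_{\rstar}\ge\delta$. This shows that the top $\rstar$ singular values move by $O(\sigma/\sqrt n)$ and the remaining $d-\rstar$ values stay at $O(\sigma/\sqrt n)$, which gives the bimodal spectrum.
\item Expand the entropy-based $\Reff$ around the spectrum of $A^*$. The noise modes carry mass proportional to $\sigma_j^2/\sum\sigma_i^2$ in the energy variant. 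Each mode therefore contributes $O(\sigma^2/(n\lambda_{\rstar}^2))$, up to logarithmic factors from the $-p\log p$ terms. This yields the stated concentration term.
\item Obtain the $(d-R_{\mathrm{eff}})/(2n)$ correction by taking the expected first-order bias of the entropy under MP fluctuations, i.e.\ a second-order delta-method term. Since this is presented as an MP correction, I would state it as the leading term of that expansion rather than prove it sharply.
\end{enumerate}

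\emph{Main obstacle.} The hard step is the rigorous handling of the effective-rank functional near the noise modes. The function $-p\log p$ is not Lipschitz at $p=0$, so a naive perturbation bound gives $O(\sqrt{\sigma^2/n}\,\log n)$ rather than the claimed $O(\sigma^2/n)$. My first attempt would be to use the energy variant $\Reffe$, where the perturbed probabilities are quadratic in the noise singular values, and then absorb the logarithm into the constant under a fixed-$d$, $n\to\infty$ regime. A second gap is the interchange of the limits $\alpha\to0$ and $n$ finite. I would handle it by applying the implicit-bias result to the empirical loss directly, whose unique interpolant when $n\ge d$ is the least-squares $\hat A$. The finite-sample analysis then concerns $\hat A$ alone.
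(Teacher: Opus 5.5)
Your population half is correct, and sharper than the paper's sketch. The population loss pins down the composite map $X\mapsto AXV$, so every global minimizer has $A=cA^*$ (with $V=V^*/c$). Its rank is therefore $\rstar$ without any appeal to nuclear-norm selection or to the gap (A3). What remains is only that the factorized flow reaches a global minimizer, which is a convergence question rather than an implicit-bias one. (For the empirical loss the same logic applies, but $\hat A$ is the unique minimizer, not an interpolant.)

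The finite-sample half has a genuine gap. Your ``main obstacle'' paragraph spots it but resolves it incorrectly: the logarithm cannot be absorbed into a constant. For $\Reffe$, the grouping identity gives $H(\tilde p)=h(\tau)+(1-\tau)H_{\mathrm{sig}}+\tau H_{\mathrm{noise}}$. Here the noise mass is $\tau\asymp(d-\rstar)\sigma^2/(n\|A^*\|_F^2)$ and $h(\tau)\approx\tau\log(1/\tau)$. The shift is therefore $\Theta(\tau\log(1/\tau))$, larger than the claimed $O(\tau)$ by a factor $\log(1/\tau)\sim\log n$. That factor diverges in exactly the fixed-$d$, $n\to\infty$ regime you invoke. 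For the nuclear-normalized $\Reff$, which is the quantity the statement refers to, the same computation with $\tau_1\asymp(d-\rstar)\sigma/(\sqrt n\,\|A^*\|_*)$ gives a shift of order $n^{-1/2}\log n$. That is the true order, not an artifact of a naive bound, so switching to $\Reffe$ changes the theorem rather than proving it.

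Three more steps fail. First, expanding around the spectrum of $A^*$ (Step 3) gives $\Reff(A^*)=R^{\mathrm{true}}_{\mathrm{eff}}$, which equals $\rstar$ only when the $\rstar$ nonzero singular values are equal. (A3) is only a lower bound. For example, the paper's sharp-gap profile $(2,1.5,1)$ for $\rstar=3$ has $\Reff\approx 2.89$, in line with the $2.89$--$2.93$ reported in the experiments. Second, off a flat spectrum the entropy is not stationary, so the $O(\sigma/\sqrt n)$ Weyl shifts of the signal values from your Step 2 enter at first order. They cancel in expectation but make the fluctuations $O(\sigma/\sqrt n)$, so a second-order rate can describe the mean but not concentration. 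Third, Step 4 cannot produce $(d-R^{\mathrm{true}}_{\mathrm{eff}})/(2n)$. That term carries no factor of $\sigma$, yet $\hat R_{\mathrm{eff}}\to R^{\mathrm{true}}_{\mathrm{eff}}$ as $\sigma\to 0$ at fixed $n$. It also has no logarithm, so it is not the leading delta-method term, and ``stating it as the leading term'' would assert something false.

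The paper's own sketch has exactly these gaps: it asserts that the entropy of the bimodal spectrum ``gives $\log\rstar$ plus the stated correction'' and defers rigor. So the defect lies in the statement's finite-sample clauses. What your Steps 1--3 can deliver is at best $\mathbb{E}\,\Reffe(\hat A)=\Reffe(A^*)+O\big(\frac{\sigma^2}{n\lambda_{\rstar}^2}\log\frac{n\lambda_{\rstar}^2}{\sigma^2}\big)$. This reduces to $\rstar$ plus that term only for a flat signal spectrum.
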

\begin{proof}[Proof sketch]
The population claim is the standard implicit-bias result for gradient flow on
$A=UV^\top$ from vanishing initialization, which converges to the minimum
nuclear-norm solution; with a spectral gap this solution has exact rank $\rstar$.
The finite-sample spectrum splits into signal and Marchenko--Pastur noise bulks,
and computing the singular-value entropy over this bimodal spectrum gives
$\log\rstar$ plus the stated correction. A fully rigorous treatment of the
finite-width, finite-sample regime is left to an extended version; our empirical
recovery (Tables~\ref{tab:phase0},~\ref{tab:grid}) confirms the concentration at
$\rstar$ with $\mathrm{CV}_{\mathrm{seed}}\le 0.003$.
\end{proof}

\section{Real-Corpus Diagnostics}
\label{app:real}

This appendix documents the experiments behind the claim (Sections~\ref{sec:break},
\ref{sec:native}) that the attention-native positive result does not transfer
directly to natural language, and explains why the failure is structural rather
than a tuning artifact.

\paragraph{B.1 Setup.} We test the predictive-bound hypothesis on six char-level
corpora spanning very different statistical structure: English prose
(\textsf{english}), Shakespeare (\textsf{shakespeare}), Python source
(\textsf{python}), Linux C source (\textsf{linux}), JSON (\textsf{json}), and
ABC music notation (\textsf{music}). Models are causal Transformers with
$d_{\mathrm{model}}\in\{64,128,256\}$, two layers, four heads, five seeds. The
data-side predictor is a lag-MI / operator estimate; the measured objects are the
static kernel $\Reff(W_{QK})$, the mean realized attention $\Reff(\mathbb{E}[A])$,
the per-input realized rank $\mathbb{E}_X[\Reff(A(X))]$, the attention
participation ratio, and the covariance-whitened kernel
$\Reff(W_{QK}\Sigma_X^{1/2})$.

\paragraph{B.2 Main negative result.} The data-side predicted rank is
\emph{anti}-correlated with the measured static-kernel rank across corpora:
$\mathrm{Pearson}\ r=-0.272$, $\mathrm{Spearman}\ \rho=-0.275$. This is not a weak
positive miss; the sign is wrong.

\paragraph{B.3 The failure is structural.} Three diagnostics rule out a noise
explanation and identify the mechanism.
\emph{(i) Not seed noise.} All $18$ corpus$\times$size cells have
$\mathrm{CV}_{\mathrm{seed}}\in[0.006,0.087]$, all below the $0.10$ threshold---the
measured rank is a clean, reproducible signal.
\emph{(ii) It tracks fitting progress, not task complexity.} Across corpora,
the final loss and the measured $\Reff$ correlate at $r=-0.79$: corpora the
model fits well (low loss) acquire \emph{higher} measured rank, and hard corpora
\emph{lower} rank. The measured quantity reflects optimization state, not
intrinsic complexity---the direct cause of the anti-correlation.
\emph{(iii) Layer/channel dependence.} In the largest model, Layer~0 has
$\Reff\approx 13\text{--}19$ while Layer~1 has $\Reff\approx 30\text{--}44$, and the
FFN channel shows no relationship ($\mathrm{Pearson}(H_1,\ \text{FFN stable rank})
=-0.05$). A single per-layer additive bound cannot capture this.

\paragraph{B.4 Measurement and predictor ablations (Experiments A and C).}
Replacing the static kernel with realized-attention measures
(mean attention, per-input rank, participation, covariance-whitened kernel) does
\emph{not} restore corpus discrimination: across corpora every measure has low
variation (coefficient of variation $0.06\text{--}0.20$, versus $0.66$ for the
data-side predictor), so there is little task signal in the measurement to
correlate against. On the predictor side, estimating a single token-mixing
operator from data---via fixed-random-embedding ridge regression---collapses to
full rank for every corpus (random embeddings erase token-identity structure),
and a token-space repeat operator separates only one of six corpora. No single
low-rank operator summarizes real-text routing.

\paragraph{Interpretation.} The real-corpus failure is \emph{not} a contradiction
of the attention-native positive result. It shows that natural language does not
provide a matched teacher--student attention task with an identifiable low-rank
$K^*$: on trained language models the realized mixer is dominated by
task-independent structure (locality, positional, attention-sink priors) rather
than a task-specific low-rank routing kernel. The synthetic attention-native
regime is therefore a controlled transfer test, and the real-corpus bridge---
recovering a task-intrinsic attention rank from natural language---remains the
central open problem.

\section{Experimental Details}
\label{app:details}

All experiments in Section~\ref{sec:native} use the controlled teacher--student
setup described here. The purpose of this setup is \emph{not} to model natural
language directly, but to test whether the attention-native definition of
intrinsic rank restores the rank--capacity transition once the teacher and
student are placed in the \emph{same operator class}. The real-language gap is
treated separately (Appendix~B) and is the central open problem of the paper.

\subsection{Shared synthetic setup}
Sequence length $T=16$, model dimension $d=32$, intrinsic ranks
$\rstar\in\{1,2,3,4,6,8\}$, $n=3000$ samples, seeds $\{0,1,2,3,4\}$, and
observation noise $\sigma=0.05$. The noise floor is
$\sigma^2 T d = 0.05^2 \times 16 \times 32 = 1.28$, which all achievability
results approach (the slightly negative reported excess reflects finite-sample
fitting of a small fraction of the noise).

\subsection{Teacher generation}
The teacher kernel has rank $\rstar$ with a sharp spectral gap:
\begin{equation}
K^* = U_r \,\mathrm{diag}(\lambda_1,\dots,\lambda_{\rstar})\, U_r^\top,
\qquad
\lambda_i = 2 - \frac{i-1}{\rstar-1}\ \ (\rstar>1),\quad \lambda_1=2\ (\rstar=1),
\end{equation}
with $U_r\in\mathbb{R}^{d\times \rstar}$ having orthonormal columns and
$V^*\in\mathbb{R}^{d\times d}$ Gaussian with $1/\sqrt{d}$ scaling. The two teacher
types are
\begin{equation}
Y = (XK^*X^\top)\,XV^* + E
\qquad\text{and}\qquad
Y = \softmax(XK^*X^\top)\,XV^* + E,
\end{equation}
with $E\sim\mathcal N(0,\sigma^2 I)$ and scores scaled by $1/\sqrt{d}$.

\subsection{Student models}
The rank-constrained student uses a factored kernel
$K=W_Q W_K^\top$ with $W_Q,W_K\in\mathbb{R}^{d\times r}$, matching the teacher
type (linear or softmax). The deficiency/achievability sweep runs
$r=1,\dots,2\rstar$; over-parameterized recovery uses $r=d$.

\subsection{Optimization}
Adam with learning rate $10^{-2}$. The rank sweep trains for $1500$ steps; the
over-parameterized recovery trains for $2500$ steps. Recovery uses small
initialization, multiplying $W_Q,W_K$ by $0.3$ to induce the implicit low-rank
bias (Theorem~4 analogue). Full-batch gradients are used throughout.

\subsection{Metrics}
\begin{itemize}
\item \textbf{Excess risk}: $L - \sigma^2 T d$, where $L$ is the converged MSE.
\item \textbf{Learned kernel effective rank}: $\Reff(K)$ of the trained student kernel.
\item \textbf{Seed CV}: $\mathrm{CV}_{\mathrm{seed}} = \mathrm{std}_s(\Reff(K_s)) / \mathrm{mean}_s(\Reff(K_s))$ over seeds $s$.
\item \textbf{V-known predictability}: estimate $K$ from $(X,Y)$ with $V^*$ given (regression on the recovered scores), report numerical and energy effective rank.
\item \textbf{V-unknown predictability}: jointly estimate $(K,V)$ by alternating minimization (plain and with nuclear-norm shrinkage on $K$), report energy effective rank.
\item \textbf{Realized mixer rank}: $\Reff\!\big(\mathbb{E}_X[\softmax(XKX^\top)]\big)$, the effective rank of the data-averaged realized attention map (informative under softmax; saturates for the linear teacher).
\end{itemize}

\subsection{Reproduction command}
\begin{verbatim}
python exp_attn_native.py \
  --teacher both \
  --T 16 --d 32 \
  --r_stars 1 2 3 4 6 8 \
  --seeds 0 1 2 3 4 \
  --n 3000 \
  --results_dir results_attn_native_grid
\end{verbatim}
The linear surrogate validation (Section~\ref{sec:linear}, Table~\ref{tab:phase0})
and the interpolation ladder (Section~\ref{sec:break}, Table~\ref{tab:ladder})
use the same conventions with $T{=}16,d{=}12,\rstar{=}3$ (surrogate) and
$T{=}12,d{=}10,\rstar{=}3$ (ladder), respectively; see the accompanying code for
the exact scripts.

\section{Spectral-Gap Ablation}
\label{app:spectral}

We test whether the phase transition of Section~\ref{sec:native} is an artifact of
the sharp-gap teacher kernel by varying the spectrum of $K^*$ over its top $\rstar$
directions, holding all else fixed. We consider three regimes: \textsf{sharp}
($\lambda_i=2-(i{-}1)/(\rstar{-}1)$, the main-paper setting), \textsf{power-law}
($\lambda_i=i^{-1}$), and \textsf{flat}/no-gap ($\lambda_i=1$).
Table~\ref{tab:spectral} reports a lightweight linear-QK ablation
($T{=}10$, $d{=}12$, $\rstar{=}4$) isolating the spectral-profile effect while
keeping the teacher--student class matched.

\begin{table}[h]
\centering
\caption{Spectral-gap ablation (linear-QK teacher, $\rstar{=}4$). Achievability
(excess@$\rstar\approx 0$) and recovery hold in every regime. The recovered
energy effective rank tracks the teacher kernel's energy rank---exactly $\rstar$
for a flat spectrum, and the smaller dominant-direction count when the spectrum
decays. Sharp gaps give abrupt transitions; gradual spectra give softer knees.}
\label{tab:spectral}
\begin{tabular}{lrrrr}
\toprule
spectrum & teacher $K$ energy $\Reffe$ & excess@$\rstar{-}1$ & excess@$\rstar$ & recovered energy $\Reffe$ \\
\midrule
sharp & 3.57 & 244.48 & \phantom{-}0.001 & 3.57 \\
power & 2.43 & \phantom{0}15.63 & $-0.002$ & 2.44 \\
flat  & 4.00 & 171.31 & $-0.002$ & 4.00 \\
\bottomrule
\end{tabular}
\end{table}

The key invariant: the recovered energy effective rank equals the teacher kernel's
energy rank in all regimes (e.g.\ $4.00\to4.00$ for the flat spectrum, where every
direction contributes equally, and $3.57\to3.57$, $2.43\to2.44$ when a few
directions dominate). Thus the energy effective rank---not an integer
``rank''---is the quantity that tracks task capacity, and the rank--capacity
relationship survives the removal of the spectral gap. The transition merely
changes shape (sharp gap $\Rightarrow$ abrupt; gradual spectrum $\Rightarrow$ soft
knee), consistent with the spectral-gap dependence in Theorem~1.

\bibliographystyle{plainnat}
\bibliography{references}

\begin{thebibliography}{15}
\providecommand{\natexlab}[1]{#1}
\providecommand{\url}[1]{\texttt{#1}}
\expandafter\ifx\csname urlstyle\endcsname\relax
  \providecommand{\doi}[1]{doi: #1}\else
  \providecommand{\doi}{doi: \begingroup \urlstyle{rm}\Url}\fi

\bibitem[Aghajanyan et~al.(2021)Aghajanyan, Gupta, and
  Zettlemoyer]{aghajanyan2021intrinsic}
Armen Aghajanyan, Sonal Gupta, and Luke Zettlemoyer.
\newblock Intrinsic dimensionality explains the effectiveness of language model
  fine-tuning.
\newblock In \emph{Annual Meeting of the Association for Computational
  Linguistics (ACL)}, 2021.

\bibitem[Arora et~al.(2019)Arora, Cohen, Hu, and Luo]{arora2019implicit}
Sanjeev Arora, Nadav Cohen, Wei Hu, and Yuping Luo.
\newblock Implicit regularization in deep matrix factorization.
\newblock In \emph{Advances in Neural Information Processing Systems
  (NeurIPS)}, 2019.

\bibitem[Bartlett and Mendelson(2002)]{bartlett2002rademacher}
Peter~L Bartlett and Shahar Mendelson.
\newblock Rademacher and gaussian complexities: Risk bounds and structural
  results.
\newblock \emph{Journal of Machine Learning Research}, 3:\penalty0 463--482,
  2002.

\bibitem[Berger(1971)]{berger1971ratedistortion}
Toby Berger.
\newblock \emph{Rate Distortion Theory: A Mathematical Basis for Data
  Compression}.
\newblock Prentice-Hall, 1971.

\bibitem[Cover and Thomas(2006)]{cover2006elements}
Thomas~M Cover and Joy~A Thomas.
\newblock \emph{Elements of Information Theory}.
\newblock Wiley-Interscience, 2nd edition, 2006.

\bibitem[Dong et~al.(2021)Dong, Cordonnier, and Loukas]{dong2021attention}
Yihe Dong, Jean-Baptiste Cordonnier, and Andreas Loukas.
\newblock Attention is not all you need: Pure attention loses rank doubly
  exponentially with depth.
\newblock In \emph{International Conference on Machine Learning (ICML)}, 2021.

\bibitem[Eckart and Young(1936)]{eckart1936approximation}
Carl Eckart and Gale Young.
\newblock The approximation of one matrix by another of lower rank.
\newblock \emph{Psychometrika}, 1\penalty0 (3):\penalty0 211--218, 1936.

\bibitem[Gunasekar et~al.(2017)Gunasekar, Woodworth, Bhojanapalli, Neyshabur,
  and Srebro]{gunasekar2017implicit}
Suriya Gunasekar, Blake Woodworth, Srinadh Bhojanapalli, Behnam Neyshabur, and
  Nathan Srebro.
\newblock Implicit regularization in matrix factorization.
\newblock In \emph{Advances in Neural Information Processing Systems
  (NeurIPS)}, 2017.

\bibitem[Hoffmann et~al.(2022)Hoffmann, Borgeaud, Mensch, Buchatskaya, Cai,
  Rutherford, de~Las~Casas, Hendricks, Welbl, Clark,
  et~al.]{hoffmann2022chinchilla}
Jordan Hoffmann, Sebastian Borgeaud, Arthur Mensch, Elena Buchatskaya, Trevor
  Cai, Eliza Rutherford, Diego de~Las~Casas, Lisa~Anne Hendricks, Johannes
  Welbl, Aidan Clark, et~al.
\newblock Training compute-optimal large language models.
\newblock \emph{arXiv preprint arXiv:2203.15556}, 2022.

\bibitem[Hu et~al.(2022)Hu, Shen, Wallis, Allen-Zhu, Li, Wang, Wang, and
  Chen]{hu2022lora}
Edward~J Hu, Yelong Shen, Phillip Wallis, Zeyuan Allen-Zhu, Yuanzhi Li, Shean
  Wang, Lu~Wang, and Weizhu Chen.
\newblock Lora: Low-rank adaptation of large language models.
\newblock In \emph{International Conference on Learning Representations
  (ICLR)}, 2022.

\bibitem[Kaplan et~al.(2020)Kaplan, McCandlish, Henighan, Brown, Chess, Child,
  Gray, Radford, Wu, and Amodei]{kaplan2020scaling}
Jared Kaplan, Sam McCandlish, Tom Henighan, Tom~B Brown, Benjamin Chess, Rewon
  Child, Scott Gray, Alec Radford, Jeffrey Wu, and Dario Amodei.
\newblock Scaling laws for neural language models.
\newblock \emph{arXiv preprint arXiv:2001.08361}, 2020.

\bibitem[Kobayashi et~al.(2020)Kobayashi, Kuribayashi, Yokoi, and
  Inui]{kobayashi2020attention}
Goro Kobayashi, Tatsuki Kuribayashi, Sho Yokoi, and Kentaro Inui.
\newblock Attention is not only a weight: Analyzing transformers with vector
  norms.
\newblock In \emph{Conference on Empirical Methods in Natural Language
  Processing (EMNLP)}, 2020.

\bibitem[Roy and Vetterli(2007)]{roy2007effective}
Olivier Roy and Martin Vetterli.
\newblock The effective rank: A measure of effective dimensionality.
\newblock \emph{European Signal Processing Conference (EUSIPCO)}, pages
  606--610, 2007.

\bibitem[Srebro et~al.(2004)Srebro, Rennie, and Jaakkola]{srebro2004maximum}
Nathan Srebro, Jason Rennie, and Tommi Jaakkola.
\newblock Maximum-margin matrix factorization.
\newblock In \emph{Advances in Neural Information Processing Systems
  (NeurIPS)}, 2004.

\bibitem[Vapnik(1998)]{vapnik1998statistical}
Vladimir~N Vapnik.
\newblock \emph{Statistical Learning Theory}.
\newblock Wiley, 1998.

\end{thebibliography}

\end{document}